\newtheorem{theorem}{Theorem}
\theoremstyle{remark}
\newtheorem{lemma}{Lemma}
\DeclareMathOperator*{\argmax}{argmax} 
\title{Majority Voting and the Condorcet's Jury Theorem}
\author[1,2]{Hanan Shteingart}
\author[2,3]{Eran Marom}
\author[1]{Igor Itkin}
\author[1]{Gil Shabat}
\author[1]{\\Michael Kolomenkin}
\author[1,4]{Moshe Salhov}
\author[5]{Liran Katzir}
\affil[1]{Playtika Brain}
\affil[2]{Yandex School of Data Science}
\affil[3]{Gong.io}
\affil[4]{School of Computer Science, Tel Aviv University, Israel}
\affil[5]{Technion – Israel Institute of Technology}
\begin{document}
\maketitle

\begin{abstract}
There is a striking relationship between a three hundred years old Political Science theorem named "Condorcet's jury theorem" (1785), which states that majorities are more likely to choose correctly when individual votes are often correct and independent, and a modern Machine Learning concept called "Strength of Weak Learnability" (1990), which describes a method for converting a weak learning algorithm into one that achieves arbitrarily high accuracy and stands in the basis of Ensemble Learning. Albeit the intuitive statement of Condorcet's theorem, we could not find a compact and simple rigorous mathematical proof of the theorem neither in classical handbooks of Machine Learning nor in published papers. By all means we do not claim to discover or reinvent a theory nor a result. We humbly want to offer a more publicly available simple derivation of the theorem. We will find joy in seeing more teachers of introduction-to-machine-learning courses use the proof we provide here as an exercise to explain the motivation of ensemble learning.
\end{abstract}

\keywords{Majority Voting \and Condorcet's Jury Theorem \and Ensemble Learning \and }
\section{Introduction}
The idea of combining several opinion is ancient and has been formalized with the "Condorcet's jury theorem" (1785)  \cite{condorcet1785essay},  which is considered by some to be the theoretical basis for democracy \cite{ladha1992condorcet}. The theorem states that a majority of independent individuals who make correct decisions with probability $p>1/2$, i.e. better than by random choice, are more likely to choose correctly than each individual. The theorem can be separated into two parts. The first, non-asymptotic monotonous part, states that the probability of the majority vote to be correct increases monotonically with the size of the population ($n$). The second, asymptotic part, states that the probability to be correct converges to 1 as the number of individuals increases to infinity.

Analogically, in Machine Learning, the problem of combining output of several simple classifiers culminated in Ensemble Learning (e.g. AdaBoost \cite{freund1995desicion}), which is often considered to be one of the most powerful learning ideas introduced in the last twenty years \cite{friedman2001elements}. Ensemble Learning is a machine learning paradigm where multiple classifiers are trained to solve the same problem. The final decision is obtained by taking a (weighted) vote of their predictions \cite{dietterich2000ensemble}. The combination of different classifiers can be implemented using a variety of strategies, among which majority vote is by far the simplest \cite{kittler1998combining}. Albeit its simplicity, if the errors among the classifiers are not correlated, it has been claimed that the majority vote is the best strategy \cite{kittler2003multiple} (see also Theorem \ref{th:optimal} below). Notably, the suggestion to combine weak performing units into a better one was previously studied in the context of hardware design by giants like Shannon and Moore \cite{moore1956reliable}.

Notwithstanding, albeit the simplicity and usefulness of the majority decision as a rule, as presented above and as reflected in the Condorcet's jury theorem, it is surprisingly very hard to find a rigours, compact and simple derivation of it in the professional literature. For example, one paper which studied the majority rule extensively states that the recursive formula, derived in this paper, is an "unpublished" result \cite{lam1997application}. Another publications claim that the theorem's proof is ``straightforward'', yet neither provide such a proof, nor cite where such proof can be found \cite{ben2000nonasymptotic, berend2005monotonicity}. Beyond the professional literature, the proof in Wikipedia \cite{wiki} seems in-cohesive as can be demonstrated by a question raised in the Stack Exchange website: "here is a proof in Wikipedia but I am don't understand it's correctness" \cite{stackexchange}. By contrast, in this paper we provide for the first time, to the best of our knowledge, a simple and compact derivation of the Condorcet theorem monotonous and asymptotic parts. 

\begin{figure}
\centering
  \includegraphics[width=0.33\textwidth]{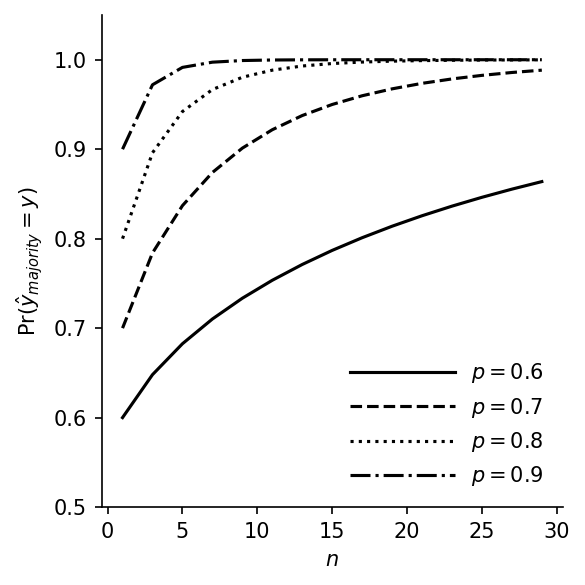}
  \caption{Probability of majority to be correct as a function of number of  classifiers $n$ (odd), for different probabilities of a single classifier to be correct $p$}
  \label{fig:boat1}
\end{figure}

\section{Proof of Condorcet's Jury Theorem}
In Section \ref{sec:optimality} we will show that majority rule (for equiprobable classes) coincides with the Bayes classifier, that is the maximum a posterior decision rule, which minimizes prediction errors. Next, in Section \ref{sec:monotonic} we will prove that the correctness probability of majority rule is monotonically increasing (Theorem \ref{th:cond_monotonic}), as depicted in Figure \ref{fig:boat1}. Finally, in Section \ref{sec:asymptotic} we will show also that this probability converges to one (Theorem \ref{th:con_asymptotic}). The two last theorem will conclude the proof of the Condorcet's Jury Theorem.

\subsection{Optimality of the Majority Rule}
\label{sec:optimality}
Before starting the proof of the Condorcet's jury Theorem, let us first consider its motivation as an optimal Bayes classifier. 
\begin{theorem}
\label{th:optimal}
Let there be an odd number $n=2m+1$ of binary independent classifiers, each classifier being correct with probability $\Pr(\hat{y}=y)=p>1/2$, where $y, \hat{y} \in\{-1,1\}$  are the true and predicted labels, respectively. Moreover, assume that the prior labels are equiprobable $\Pr(y=1)=\Pr(y=0)$. Let $S_n=\sum_{i=1}^{n}{\hat{y}_n}$ be the sum of these $n$ classifier predictions, and let $\hat{y}_{\text{majority}}^n=\text{sign}(S_n)$ be a majority decision rule over the $n$ classifiers. Then, the latter majority rule is a Bayes classifier which minimizes the classification error.
\end{theorem}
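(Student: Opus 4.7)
The plan is to appeal directly to the definition of the Bayes classifier, namely the rule $\hat{y}_{\text{Bayes}} = \argmax_{y \in \{-1,+1\}} \Pr(y \mid \hat{y}_1, \ldots, \hat{y}_n)$, and show that under the stated assumptions this argmax coincides with $\text{sign}(S_n)$. The optimality (minimum misclassification probability) is then the standard property of the Bayes classifier, so the content of the theorem is really the equality between the posterior-maximising rule and the majority rule.

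First I would apply Bayes' rule to get $\Pr(y \mid \hat{y}_1, \ldots, \hat{y}_n) \propto \Pr(\hat{y}_1, \ldots, \hat{y}_n \mid y)\Pr(y)$. Because the prior is uniform ($\Pr(y=1)=\Pr(y=-1)=1/2$), the prior factor is constant in $y$ and drops out of the argmax. Next, conditional independence of the classifiers given $y$ lets me factor $\Pr(\hat{y}_1,\ldots,\hat{y}_n \mid y) = \prod_{i=1}^n \Pr(\hat{y}_i \mid y)$, and each factor equals $p$ if $\hat{y}_i = y$ and $1-p$ otherwise. Letting $k_y$ denote the number of classifiers that agree with the candidate label $y$, the product collapses to $p^{k_y}(1-p)^{n-k_y}$.

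The key observation is that since $p>1/2$ we have $p>1-p$, so the map $k \mapsto p^{k}(1-p)^{n-k}$ is strictly increasing in $k$. Hence the Bayes rule picks the candidate $y \in \{-1,+1\}$ with the larger agreement count $k_y$. Because $n=2m+1$ is odd we always have $k_{+1}+k_{-1}=n$ with $k_{+1}\neq k_{-1}$, and since $k_{+1}-k_{-1}=S_n$, the label with the larger count is precisely $\text{sign}(S_n)$. Thus $\hat{y}_{\text{majority}}^n = \hat{y}_{\text{Bayes}}$, and invoking the standard fact that the MAP rule minimises the probability of error completes the argument.

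The main obstacle is essentially bookkeeping, namely translating cleanly between "more classifiers agree with $y$" and $\text{sign}(S_n)$; the oddness of $n$ is what makes this translation unambiguous, and it is worth stating explicitly so that the reader sees why ties never need to be adjudicated. Everything else is a one-line application of Bayes' rule plus conditional independence.
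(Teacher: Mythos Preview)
Your proposal is correct and follows essentially the same approach as the paper: reduce MAP to maximum likelihood via the uniform prior, factor the likelihood by conditional independence, and observe that the likelihood is larger for the label agreeing with the majority. The only cosmetic difference is that the paper reaches the last step by computing the log-likelihood ratio $\log\!\bigl(p/(1-p)\bigr)\sum_i \hat{y}_i$ explicitly, whereas you phrase it as monotonicity of $k\mapsto p^{k}(1-p)^{n-k}$; these are the same observation.
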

\begin{proof}

The Bayes classifier which minimizes the probability of misclassification  $\Pr(\hat{y} \ne y)$ follows the maximum a posterior (MAP) criteria  $\hat{y}_\text{Bayes}=\argmax\limits_y \Pr(y|\{y_i\}_{i=1}^n)$  \cite{devroye2013probabilistic}. We now show that the majority rule is equivalent to MAP.
According to Bayes rule,
\begin{equation}
    Pr(y|\{y_i\}_{i=1}^n)=\frac{\Pr(\{y_i\}_{i=1}^n|y)\Pr(y)}{\Pr(\{y_i\}_{i=1}^n)}
\end{equation}
Therefore, for equally probable classes $\Pr(y=0)=\Pr(y=1)$,  maximizing the MAP is equivalent to maximizing the likelihood: $\argmax\limits_y \Pr(y|\{y_i\}_{i=1}^n)=\argmax\limits_y\Pr(\{y_i\}_{i=1}^n|y) $

The likelihood for $y=1$ follows the Binomial distribution:
\begin{equation}
    \Pr(\{y_i\}_{i=1}^n|y=1)=\prod_{i=1}^n{p^{(y_i+1)/2}(1-p)^{(1-y_i)/2}}
\end{equation}. 
Applying log on both sides, one get the log likelihood as:
\begin{equation}
    log(\Pr(\{y_i\}_{i=1}^n|y=1))=\frac{1}{2}\sum_{i=1}^n{log(p)(y_i+1) + log(1-p)(1-y_i)}
\end{equation}. 
Similarly, for $y=0$, $\log(\Pr(\{y_i\}_{i=1}^n|y=0))=\frac{1}{2}\sum_{i=1}^n{log(1-p)(y_i+1) + log(p)(1-y_i)}$. Therefore, the log likelihood ratio $\textit{LLR}=\log(\frac{\Pr(\{y_i\}_{i=1}^n|y=1))}{\Pr(\{y_i\}_{i=1}^n|y=0))}$  equals to: 
\begin{equation}
    \textit{LLR} =\log(\frac{p}{1-p})\sum_{i=1}^n{y_i}
\end{equation}
The latter concludes the proof as it shows that $\hat{y}_\text{Bayes}=\argmax\limits_y \Pr(y|\{y_i\}_{i=1}^n)=\text{sign}(\sum_{i=1}^n{y_i})$ so that the MAP criteria coincides with the majority rule, which minimize the error probability (for equiprobable prior).
\end{proof}

\subsection{Monotonic Part}
\label{sec:monotonic}
Next, we show the non-asymptotic monotonic part of Condorcet's Jury Theorem which can be rephrased as follows:
\begin{theorem}\label{th:cond_monotonic}
 Let there be an odd number $n=2m+1$ of binary independent classifiers, each classifier being correct with probability $\Pr(\hat{y}=y)=p>1/2$, where $y, \hat{y} \in\{-1,1\}$  are the true and predicted labels, respectively. 
Let $S_n=\sum_{i=1}^{n}{\hat{y}_n}$ be the sum of these $n$ classifier predictions, and let $\hat{y}_{\text{majority}}^n=\text{sign}(S_n)$ be a majority decision rule over the $n$ classifiers. Then, the probability of a majority prediction to be correct $\Pr(\hat{y}_{\text{majority}}^n=y)$ increases with $n$.
\end{theorem}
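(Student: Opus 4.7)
The plan is to compare $P_n := \Pr(\hat y_{\text{majority}}^n = y)$ with $P_{n+2}$, since we must jump two classifiers at a time to stay with an odd ensemble size. The strategy is to condition on the number of correct predictions among the first $n$ classifiers, treat the two new classifiers as an independent Bernoulli$(p)$ pair, and identify exactly which outcomes flip the majority decision.

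Write $n = 2m+1$ and let $K_n$ denote the number of correct classifiers among the first $n$. The event \{original majority correct\} is $\{K_n \ge m+1\}$, while \{new majority correct\} is $\{K_n + X \ge m+2\}$, where $X \in \{0,1,2\}$ is the number of the two added classifiers that are correct. I would split the new event according to the three values of $X$, subtract the old probability, and observe that the terms with $K_n \ge m+2$ cancel. What remains is the difference
\begin{equation*}
P_{n+2} - P_n = -\Pr(K_n = m+1)\,\Pr(X=0) + \Pr(K_n = m)\,\Pr(X=2),
\end{equation*}
because adding two classifiers can only change the outcome when the original vote was decided by exactly one classifier (either $K_n = m+1$, which can flip to a loss if $X=0$, or $K_n = m$, which can flip to a win if $X=2$).

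The next step is to plug in $\Pr(K_n=m+1) = \binom{2m+1}{m+1}p^{m+1}(1-p)^m$ and $\Pr(K_n=m) = \binom{2m+1}{m}p^m(1-p)^{m+1}$, and use the key symmetry $\binom{2m+1}{m}=\binom{2m+1}{m+1}$, together with $\Pr(X=0)=(1-p)^2$ and $\Pr(X=2)=p^2$. After factoring out the common $\binom{2m+1}{m+1}p^{m+1}(1-p)^{m+1}$, the bracket collapses to $2p-1$, giving
\begin{equation*}
P_{n+2} - P_n = \binom{2m+1}{m+1}\,p^{m+1}(1-p)^{m+1}\,(2p-1),
\end{equation*}
which is strictly positive because $p > 1/2$.

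I do not expect any genuine obstacle; the only subtlety is the ``step of two'' comparison (a single added classifier would produce even $n$, where majority is undefined or requires a tie-breaking rule), together with noticing the binomial symmetry that makes the two surviving terms combinable into a single clean factor of $2p-1$. Everything else is bookkeeping on the binomial distribution.
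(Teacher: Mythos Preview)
Your proposal is correct and is essentially the same argument as the paper's: both compare $P_{n+2}$ with $P_n$ by conditioning on the two added classifiers, isolate the borderline cases (your $K_n\in\{m,m+1\}$ corresponds to the paper's $S_n\in\{-1,1\}$ via $S_n=2K_n-n$), and use the symmetry $\binom{2m+1}{m}=\binom{2m+1}{m+1}$ to obtain the same closed form $\binom{2m+1}{m}p^{m+1}(1-p)^{m+1}(2p-1)>0$. The only difference is notational (counting correct classifiers versus summing $\pm1$ predictions), not substantive.
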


\begin{proof}
Assume, without loss of generality, that the true label is $y=1$. Then, the probability to have most classifiers (majority) predict the correct label $\Pr(\hat{y}_{\text{majority}}^n=1)$ is  $\Pr(S_n>0)$. Assume two more classifiers are added to the ensemble $n\rightarrow n+2$. Then, the probability to be correct, $\Pr(\hat{y}_{\text{majority}}^{n+2}=1)= \Pr(S_{n+2}>0)$, can be computed by the law of total probability while taking into account all possible events for which this inequality holds.

Note that the since $n$ is odd, the sum $S_n$ must also be odd (so $S_n$ cannot equal 0 or 2, for instance). Also, if $S_n<-1$, no matter what will be the individual predictions of each of the two new added classifiers, it will not be enough to tip the balance and make the overall majority decision prediction correct. We are left with three mutually exclusive events for which the majority decision rule will provide a correct classification: 

\begin{itemize}
\item $S_n>2$, and then the two new classifier predictions do not matter as majority decision correctness is guaranteed; 
\item $S_n=1$, and then at least one of the two new classifier needs to be correct for the majority decision to be correct;
\item $S_n=-1$, and then two new classifiers need to be correct in order to switch the decision of the majority to be correct.
\end{itemize}

The latter three cases can be summarized in the following recursive equation:
\begin{equation}
    \label{eq:eq1}
    \Pr(S_{n+2}>0)=\Pr(S_n>2) + \Pr(S_n=1)(1-(1-p)^2) + \Pr(S_n=-1)p^2
\end{equation}
Note that $\Pr(S_n>2) + \Pr(S_n=1) = \Pr(S_n>0)$ and thus equation Eq. \ref{eq:eq1} can be rewritten as:
\begin{equation}
        \label{eq:eq2}
        \Pr(S_{n+2}>0) - \Pr(S_n>0) = \Pr(S_n=-1)p^2 - \Pr(S_n=1)(1-p)^2
\end{equation}
Therefore, in order to prove Theorem \ref{th:cond_monotonic} one need to show that the series $\Pr(S_n)$ is monotonically increasing, or in other words, that the right hand side of Eq. \ref{eq:eq2} is positive. 

Recall that $S_n$ is binomial, and that $n=2m+1$. Therefore, we know that $\Pr(S_n=1)=\binom{n}{m+1}p^{m+1}(1-p)^m$ and that $\Pr(S_n=-1)=\binom{n}{m}p^m(1-p)^{m+1}$. Also note that for $n=2m+1$ it holds that $\binom{n}{m} = \binom{n}{n-m} = \binom{n}{m+1}$, thus Eq. \ref{eq:eq2} can be rewritten as follows:  
\begin{equation}
    \label{eq:recursive_simplified}
    \Pr(S_{n+2}>0) - \Pr(S_n>0) = \binom{n}{m} p^{m+1}(1-p)^{m+1}(2p-1)
\end{equation}
This completes the proof as the right hand side of Eq. \ref{eq:recursive_simplified} is positive when $p>1/2$.
\end{proof}

\subsection{Asymptotic Part}
\label{sec:asymptotic}

Next, we show that under the same settings as in Theorem \ref{th:cond_monotonic}, when the number of classifiers grows asymptotically to infinity, the majority rule probability to be correct converges to one. We will prove this theorem in two ways. The first way, which is arguably simpler, by using Chebychev's inequality. The second way, by using the recursive formula derived in Theorem \ref{th:cond_monotonic}. 
\begin{theorem}
\label{th:con_asymptotic}
Let $n=2m+1$ be an odd number of binary independent classifiers, each with probability $\Pr(\hat{y}=y)=p>1/2$, where $y, \hat{y}\in\{-1,1\}$  are the true and predicted labels, respectively. Then, the probability of the majority prediction being correct converges to $1$ as $n$ goes to infinity. Formally, 
\begin{equation}
    \lim_{n \rightarrow \infty}\Pr(\hat{y}_{\text{majority}}^n=y)=1
\end{equation}.
\end{theorem}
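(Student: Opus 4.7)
The plan is to follow both routes mentioned in the text. For \textbf{Approach~1} (Chebyshev), I would reduce to a standard concentration argument. Assume without loss of generality that $y=1$, and define $X_i = (\hat{y}_i+1)/2 \in \{0,1\}$, which are i.i.d.\ Bernoulli$(p)$ indicators of individual correctness. The majority rule is correct precisely when $\bar{X}_n := \frac{1}{n}\sum_{i=1}^n X_i > 1/2$. Since $\mathbb{E}[\bar{X}_n] = p$ and $\mathrm{Var}(\bar{X}_n) = p(1-p)/n$, Chebyshev's inequality applied with $\epsilon = p - 1/2 > 0$ gives
\begin{equation*}
\Pr(\hat{y}_{\text{majority}}^n \ne y) \;\le\; \Pr\bigl(|\bar{X}_n - p| \ge \epsilon\bigr) \;\le\; \frac{p(1-p)}{n(p-1/2)^2},
\end{equation*}
which tends to $0$ as $n\to\infty$. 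This is essentially a one-line use of a tool that introductory students already have.

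For \textbf{Approach~2} (via the recursion of Theorem~\ref{th:cond_monotonic}), let $a_n := \Pr(S_n>0)$ for odd $n$. Eq.~\ref{eq:recursive_simplified} already shows that $a_n$ is monotonically increasing, and since $a_n \le 1$ it converges to some limit $L\in(0,1]$. Telescoping the recursion gives
\begin{equation*}
L - a_1 \;=\; (2p-1)\, p(1-p) \sum_{m=0}^{\infty} \binom{2m+1}{m}\,(p(1-p))^m.
\end{equation*}
I would then evaluate this series using the Pascal-type identity $2\binom{2m+1}{m}=\binom{2m+2}{m+1}$ and the standard generating function $\sum_{k\ge 0}\binom{2k}{k}x^k = (1-4x)^{-1/2}$, which together yield $\sum_{m\ge 0}\binom{2m+1}{m}x^m = \frac{1}{2x}\bigl((1-4x)^{-1/2}-1\bigr)$. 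Substituting $x = p(1-p)$ and using the key simplification $1-4p(1-p) = (2p-1)^2$ causes the $(2p-1)$ factor outside to cancel against $(1-4x)^{-1/2}=1/(2p-1)$, collapsing the expression to $L = p + (1-p) = 1$.

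\textbf{Main obstacle.} The Chebyshev route has essentially no obstacle and will serve as the short, accessible headline proof. The difficulty in the recursive route is not lengthy algebra but the recognition step: spotting the correct generating-function identity and, crucially, the simplification $\sqrt{1-4p(1-p)} = 2p-1$, without which the infinite sum looks intractable. I would therefore present Approach~1 first and include Approach~2 to demonstrate that the recursion derived in the previous section is already strong enough to prove the asymptotic statement on its own.
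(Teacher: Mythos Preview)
Your proposal is correct and follows essentially the same two routes as the paper: the Chebyshev argument is identical up to the cosmetic reparametrisation $\bar X_n=(S_n+n)/(2n)$ (yielding the same bound $4p(1-p)/\bigl(n(2p-1)^2\bigr)$), and the recursive route telescopes Eq.~\ref{eq:recursive_simplified} and closes the sum via the same key simplification $\sqrt{1-4p(1-p)}=2p-1$. The only minor variation is that you reach the generating function through $2\binom{2m+1}{m}=\binom{2m+2}{m+1}$ and the central-binomial series, whereas the paper invokes the identity $\sum_{m\ge0}\binom{2m+s}{m}x^m=2^s\big/\bigl((\sqrt{1-4x}+1)^s\sqrt{1-4x}\bigr)$ with $s=1$; both evaluate to $1/\bigl(p(2p-1)\bigr)$ and collapse to $L=1$.
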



\subsubsection{Proof Using Chebychev's Inequality}
\label{subsec:chebychev}
\begin{proof}
According to Chebyshev's inequality, if $X$ is a random variable with a finite expected value $\mu$ and a finite non-zero variance $\sigma ^2$. Then for any real number $\epsilon > 0$, the following holds:
\begin{equation}
\label{eq:chebychev}
   \Pr(|X-\mu |<\epsilon ) > 1-{\frac {\sigma ^{2}}{\epsilon ^{2}}}
\end{equation}

Once more, we assume, without loss of generality, that the true label is $y=1$. So the probability for each classifier to make a correct prediction, that is to predict 1, is $p$. 

Now, substituting $X=S_n$, the mean and standard variance become $\mu=E\left[S_n\right]=n(p-(1-p))=n(2p-1)$ and $\sigma^2=E\left[(S_n-\mu)^2\right]=4np(1-p)$. This can easily be seen if we consider each independent prediction as a linear transformation of the form $2b-1$ of a Bernoulli variable $b \in \{0,1\}$.

The Chebychev's inequality is correct for any positive $\epsilon$, specifically for $\epsilon=\mu$. Therefore, applying the inequality to the case on $S_n$ one get:
\begin{equation}
   \Pr(|S_n-\mu |<\mu )> 1-{\frac {\sigma ^{2}}{\mu ^{2}}}=1-\frac{4p(1-p)}{n(2p-1)^2}=1-\alpha/n
\end{equation}
where $\alpha=\frac{4p(1-p)}{(2p-1)^2}$.

Since $\Pr(|S_n-\mu |<\mu )=\Pr(S_n>0 \cap S_n < 2 \mu)<\Pr(S_n>0)$, the latter reduces to:
\begin{equation}
1- \alpha /n < \Pr(S_n>0) <1
\end{equation}

Since $Pr(S_n)$ is a sequence bounded from both sides by values which both converge to 1, we can apply the squeeze theorem to prove that $\lim_{n \rightarrow \infty}\Pr(S_n>0)=1$, thus concluding the proof.
\end{proof}

\subsubsection{Proof Using the Recursion Equation}
The proof in Section \ref{subsec:chebychev} is sufficient. Yet, for the sake of completeness, we would to also demonstrate how the recursion formula developed in Section \ref{sec:monotonic} can be used to prove the Asymptotic behaviour.

\begin{proof}
As before, without loss of generality, assume that the true label is $y=1$. Thus, the probability of a majority rule to be correct in the limit of infinite voters can be expressed as follows:
\begin{equation}
\lim_{n \rightarrow \infty}\Pr(\hat{y}_{\text{majority}}^n=1)=\lim_{n \rightarrow \infty}\Pr(S_n>0) = \Pr(S_1>0) + \sum_{n=1}^{\infty}\Pr(S_{n+2}>0)-\Pr(S_n>0) 
\end{equation}.

Using the fact that $\Pr(S_1>0)=p$ and Eq. \ref{eq:recursive_simplified}, the above can be rewritten as:
\begin{equation}
\label{eq:cor_eq1}
   \lim_{n \rightarrow \infty}\Pr(\hat{y}_{\text{majority}}^n=1) = p + \sum_{m=0}^{\infty} {\binom{2m+1}{m} p^{m+1}(1-p)^{m+1}(2p-1)} 
\end{equation}

In the next Lemma \ref{lem:convto1} we show that the sum in the right hand side of Eq. \ref{eq:cor_eq1} equals to $1-p$.

\begin{lemma}
\label{lem:convto1}
For any positive integer $m$ and $1/2 < p \le 1$, the following holds:
\begin{equation}
\label{eq:recursive_sum}
\sum_{m=0}^\infty \binom{2m+1}{m} p^{m+1}(1-p)^{m+1}(2p-1) = 1-p
\end{equation}
\end{lemma}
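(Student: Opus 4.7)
My plan is to evaluate the series in closed form by reducing it to the well-known generating function for the central binomial coefficients, $\sum_{n=0}^{\infty} \binom{2n}{n} x^n = (1-4x)^{-1/2}$, valid for $|x| < 1/4$.

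First I would set $q = 1-p$ and pull the $p$-independent-of-$m$ factors outside the sum, rewriting the left-hand side of Eq.~\ref{eq:recursive_sum} as $(2p-1)\,pq\,\sum_{m=0}^{\infty} \binom{2m+1}{m}(pq)^m$. Next I would use the identity $\binom{2m+1}{m} = \tfrac{1}{2}\binom{2m+2}{m+1}$ (immediate from Pascal's rule applied to $\binom{2m+2}{m+1} = \binom{2m+1}{m} + \binom{2m+1}{m+1}$ and symmetry) to identify the sum with a shifted central-binomial series, yielding
\begin{equation*}
\sum_{m=0}^{\infty} \binom{2m+1}{m}(pq)^m = \frac{1}{2pq}\left(\frac{1}{\sqrt{1-4pq}} - 1\right),
\end{equation*}
after substituting the generating function and isolating the $n=0$ term. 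The radius of convergence requirement $4pq < 1$ is automatic, since $4p(1-p) < 1$ whenever $p \neq 1/2$.

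The key algebraic observation that makes everything collapse is the identity $1 - 4p(1-p) = (2p-1)^2$, so $\sqrt{1-4pq} = 2p-1$ under the hypothesis $p > 1/2$ (this is where the sign condition is used; taking the wrong root would flip the answer). Substituting back gives
\begin{equation*}
(2p-1)\,pq \cdot \frac{1}{2pq}\left(\frac{1}{2p-1} - 1\right) = \frac{1}{2}\bigl(1 - (2p-1)\bigr) = 1-p,
\end{equation*}
which is the desired equality.

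The main obstacle, as I see it, is not the manipulation itself but recognizing the right reduction: once one spots that $\binom{2m+1}{m}$ is half of a central binomial coefficient and that $4pq$ is a perfect square on $(1/2, 1]$, the computation is essentially forced. An alternative route would be to verify the identity by induction on partial sums using the recursion of Theorem~\ref{th:cond_monotonic} directly, but that amounts to re-deriving the generating function in disguise, so the approach above seems most economical.
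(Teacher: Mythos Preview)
Your proof is correct and follows essentially the same route as the paper: factor out $(2p-1)p(1-p)$, evaluate the remaining power series in $p(1-p)$ via a binomial generating-function identity, and simplify using $1-4p(1-p)=(2p-1)^2$ together with $p>1/2$ to fix the sign of the square root. The only difference is that the paper invokes the general identity $\sum_{m\ge 0}\binom{2m+s}{m}x^m = 2^s\bigl[(1+\sqrt{1-4x})^s\sqrt{1-4x}\bigr]^{-1}$ at $s=1$, whereas you reduce via Pascal's rule to the more elementary central-binomial generating function $\sum_{n\ge 0}\binom{2n}{n}x^n=(1-4x)^{-1/2}$, which is a slightly more self-contained choice but otherwise the same argument.
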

\begin{proof}
Observe the following identify \cite{grahamconcrete}:

\begin{equation}
\label{eq:bionmial_sum}
\sum_{m=0}^{\infty}{\binom{2m+s}{m}x^m}=\frac{2^s}{(\sqrt{1-4x}+1)^s\sqrt{1-4x}}
\end{equation}

Assigning $s=1$ and $x=p(1-p)$ to Eq. \ref{eq:bionmial_sum} one gets:
\begin{equation}
\label{eq:binomial_sum_s1}
\sum_{m=0}^{\infty}{\binom{2m+1}{m}p^m(1-p)^m}=\frac{2}{(\sqrt{1-4p(1-p)}+1)\sqrt{1-4p(1-p)}}
\end{equation}

Note that $1-4p(1-p) = (2p-1)^2$, so Eq. \ref{eq:binomial_sum_s1} becomes:
\begin{equation}
\label{eq:binom_reduced}
\sum_{m=0}^{\infty}{\binom{2m+1}{m}p^m(1-p)^m}=\frac{1}{p(2p-1)}
\end{equation}

Plugging equation Eq. \ref{eq:binom_reduced} to Eq. \ref{eq:recursive_sum} we prove the Lemma \ref{lem:convto1}:
\begin{equation}
(2p-1)p(1-p)\sum_{m=0}^{\infty}{\binom{2m+1}{m}p^m(1-p)^m}= 1-p
\end{equation}
\end{proof} 
The above Lemma \ref{lem:convto1} concludes the proof, as:
\begin{equation}
   \lim_{n \rightarrow \infty}\Pr(\hat{y}_{\text{majority}}^n=1) = p + 1-p = 1
\end{equation}
\end{proof} 

\section{Summary}
In this short paper we have provided simple and compact derivations of the Condorcet's theorem. We have started by showing its optimality in terms of misclassification error. The we showed that the probability of the majority decision to be correct grows with the population size (non-asymptotic monotonic part). Last, we showed that this probability converges to 1 as the population size grows to infinity (asymptotic part). The latter was proved in two ways, one which was based on the recursion equation, and another based on Chebychev's inequality.

\section*{Acknowledgements}
Aryeh Kontorovich of the Computer Science Department
in the Ben-Gurion University for fruitful discussion.
\bibliographystyle{unsrt}  
\bibliography{references} 

\end{document}